\newcommand{\todoy}[2][]{\todo[color=red!20!white,#1]{Yasin: #2}}
\newif\ifcomm
\newif\iflong
\newtheorem{thm}{Theorem}
\newcounter{assumption}%[section]
\renewcommand{\theassumption}{A\arabic{assumption}}
\newcommand{\Real}{\mathbb R}                        % Real numbers
\newcommand{\ra}{\rightarrow}
\newcommand{\argmin}{\mathop{\rm argmin}}
\newcommand{\beq}{\begin{equation}}
\newcommand{\eeq}{\end{equation}}
   \newcommand\comm[1]{\textcolor{blue}{ #1}}
   \newcommand{\mtodo}[2]{\todo{{\bf #1}: #2}} % To add comments into the text; the first argument is "who", the second is "what"
   \def\here#1{{\bf $\langle\langle$#1$\rangle\rangle$}}
   \newcommand\comm[1]{}
   \newcommand{\mtodo}[2]{}
   \def\here#1{}
\newcommand{\cG}{{\cal G}}
\newcommand{\cH}{{\cal H}}
\renewcommand{\phi}{\varphi}
\newcommand{\cN}{{\cal N}}
\newcommand{\cL}{{\cal L}}
\icmltitlerunning{A Continuation Method for Discrete Optimization}
\begin{document}

\twocolumn[
\icmltitle{A Continuation Method for Discrete Optimization and its Application\\ to Nearest Neighbor Classification}
           
% You can specify symbols, otherwise they are numbered in order.
% Ideally, you should not use this facility. Affiliations will be numbered
% in order of appearance and this is the preferred way.
%\icmlsetsymbol{equal}{*}

\begin{icmlauthorlist}
\icmlauthor{Ali Shameli}{to}
\icmlauthor{Yasin Abbasi-Yadkori}{goo}
\end{icmlauthorlist}

\icmlaffiliation{to}{Department of Management Science and Engineering, Stanford University, Stanford, CA, USA}
\icmlaffiliation{goo}{Adobe Research, San Jose, CA, USA}

\icmlcorrespondingauthor{Ali Shameli}{shameli@stanford.edu}
%\icmlcorrespondingauthor{Eee Pppp}{ep@eden.co.uk}

% You may provide any keywords that you
% find helpful for describing your paper; these are used to populate
% the "keywords" metadata in the PDF but will not be shown in the document
%\icmlkeywords{Machine Learning, ICML}

\vskip 0.3in
]

% this must go after the closing bracket ] following \twocolumn[ ...

% This command actually creates the footnote in the first column
% listing the affiliations and the copyright notice.
% The command takes one argument, which is text to display at the start of the footnote.
% The \icmlEqualContribution command is standard text for equal contribution.
% Remove it (just {}) if you do not need this facility.

\printAffiliationsAndNotice{}  % leave blank if no need to mention equal contribution
%\printAffiliationsAndNotice{\icmlEqualContribution} % otherwise use the standard text.           

%\usepackage[a4paper,top=3cm,bottom=3cm,left=3cm,right=3cm,marginparwidth=1.75cm]{geometry}

\begin{abstract}
The continuation method is a popular approach in non-convex optimization and computer vision. The main idea is to start from a simple function that can be minimized efficiently, and gradually transform it to the more complicated original objective function. The solution of the simpler problem is used as the starting point to solve the original problem. We show a continuation method for discrete optimization problems. Ideally, we would like the evolved function to be \emph{hill-climbing friendly} and to have the same global minima as the original function. We show that the proposed continuation method is the best affine approximation of a transformation that is guaranteed to transform the function to a hill-climbing friendly function and to have the same global minima.  

We show the effectiveness of the proposed technique in the problem of nearest neighbor classification. 
%We apply the resulting nearest neighbor algorithm in classification problems. 
%Nearest neighbor regression and classification algorithms have been used extensively in a large number of machine learning problems. 
Although nearest neighbor methods are often competitive in terms of sample efficiency, the computational complexity in the test phase has been a major obstacle in their applicability in big data problems. Using the proposed continuation method, we show an improved graph-based nearest neighbor algorithm. 
% We propose a new nearest neighbor method that is 
%The graph-based algorithms are particularly well-suited to big data problems. The method relies on a proximity graph structure that is constructed in an offline phase. 
%In the online phase, a simple hill-climbing method along with a smoothing technique is used to return an approximate solution.  
%Interestingly, the runtime complexity decreases given a larger training dataset.  
The method is readily understood and easy to implement. We show how the computational complexity of the method in the test phase scales gracefully with the size of the training set, a property that is particularly important in big data applications. 
%Such nonparametric methods are often competitive in terms of sample efficiency, but unlike deep learning methods, the running time in the test phase can be prohibitively large. This computational issue has been a major obstacle in applicability of nonparametric methods in big data problems.
\end{abstract}

%%%%%%%%%%%%%%%%%%%%%%%%%%%%%%%%%%%%%%%%
\section{Introduction}

The continuation method is a popular approach in non-convex optimization and  computer vision~\citep{Witkin-Terzopoulos-Kass-1987, Terzopoulos-1988, Blake-Zisserman-1987, Yuille-1987, Yuille-1990, Yuille-Peterson-Honda-1991}. The main idea is to start from a simple objective function that can be minimized efficiently, and gradually transform it to the more complicated original objective function. The method produces a sequence of optimization problems that are progressively more complicated and closer to the original nonconvex problem. The solution of a simpler problem is used as the starting point to solve a more complicated problem.
%The simpler problems are easier to solve, and the their solution is used as the starting point of solving the more complicated problems. 
%This is an intuitive approach that also provides state-of-the-art solutions in ?. 
Given an appropriate transformation, we expect that the solution of the simpler problem is close to the solution of the more complicated problem. 

In this work, we show a continuation method for discrete optimization problems. We define a notion of regularity. We say that function $f$ is hill-climbing friendly (HCF) with respect to graph $\cG$ if by starting at any node on the graph, and moving along the edges using hill-climbing, we will eventually stop at the global minimum of $f$. %These conditions are satisfied if, for example, $S\subset \Real^d$ and $f$ is convex on $\Real^d$. 
\todoy{does this have a name?!} Ideally, we would like the evolved function to be HCF and have the same global minima as the original function. Then minimizing the evolved function also solves the original optimization problem. We show a continuation method that satisfies these properties. Unfortunately, computing the corresponding transformations is computationally expensive. We show that the best affine approximation of this continuation method can be computed efficiently. 
%propose another continuation method, and show that it is the best affine approximation of a different continuation transformation that is guaranteed to make the function convex and have the same global minima. 
This result parallels the recent work of \citet{mobahi2015link} in continuous domains.  

Next, we describe the intuition behind the continuation method. Given a graph $\cG$ with the set of nodes $S$, and a function $f:S \ra \Real$, we are interested in finding a point in $S$ that minimizes $f$. In the ideal case and when the function $f$ is HCF, a hill-climbing method finds the global minima. Unfortunately, in many real-world applications, function $f$ is not HCF. Next, we show a transformation of $f$ to a HCF function. Let $N_i\subset S$ be the set of neighbors of node $i\in S$ in $\cG$ and $c\in\Real$ be a non-negative constant. Consider a sequence of functions defined by, $\forall i\in S$, 
\begin{align*}
f(i, 0)&=f(i)\,, \\
f(i, t+1)&=\min( f(i, t),  h(i,t) ) \,,
\end{align*}
where 
\[
h(i,t) = \min_{u\in N_i} f(u,t) + c \;.
\]
Function $f(i, t)$ represents the value of node $i$ at smoothing round $t$. Function $f(i,t)$ is ``more HCF'' for larger $t$. Let $D$ be the diameter of the graph. We can see that for small enough $c$, the above sequence will eventually produce a convex function after no more than $D$ steps and also the the optimum point will not change after this transformation. Let $t$ be large enough such that $f(.,t)$ is HCF. Given that $f(.,t)$ is HCF and $\argmin_{i\in S} f(i) \in \argmin_{i\in S} f(i,t)$, we only need to find the minima of $f(.,t)$. Unfortunately, computing  $f(.,t)$ can be computationally expensive. We show that an approximation of $f(.,t)$ however can be computed efficiently. In fact, it turns out that performing random walks provides an approximation that is optimal in some sense: Let $\cN$ be the operator defining the above sequence; $f(i,t+1) = \cN\{f(.,t)\}(i)$. The best affine approximation of $\cN$ can be computed by performing a random walk of length 1 on the graph. %letting $P$ be the stochastic transition matrix underlying $\cG$, $\widehat f(.,t) = P^t f$ is the best affine approximation of $f$. 

We apply the continuation method to a class of nearest neighbor search problems that are defined as optimization problems on proximity graphs. Before explaining the method, we discuss the motivation behind studying this particular class of nearest neighbor algorithms. Nearest neighbor methods are among the oldest solutions to classification and regression problems.  
%regression and classification algorithms have been used extensively in a large number of machine learning problems. 
%Let's define the nearest neighbor search problem. 
Given a set of points $S=\{x_1, x_2, \ldots, x_n\}\subset \Real^d$, a query point $y\in\Real^d$, and a distance function $d:\Real^d\times \Real^d \ra \Real$, we are interested in solving 
\beq
\label{eq:opt}
\argmin_{x\in S} d(x,y) \;.
\eeq
In a supervised learning problem, $S$ is the training set, and $y$ is a test point. The algorithm uses the label of the solution of the above problem to predict the label of $y$. The bias-variance tradeoff can be balanced by finding a number of nearest neighbors and making a prediction according to these neighbors. 
A trivial solution to problem~\eqref{eq:opt} is to examine all points in $S$ and return the point with the smallest $f$ value. The computational complexity of this method is $O(n)$, and is not practical in large-scale problems. We are interested in an approximate algorithm with sublinear time complexity. 

When size of dataset is large, the global structure of the objective $f$ defined on data $S$ can be effectively inferred from local information; for any point in $S$, there will be many nearby points as measured in a simple distance metric such as Euclidean distance. Although the Euclidean distance is not the ideal choice in high dimensional problems, it often provides reasonable results when points are close to each other. If nearest neighbors of each point in $S$ were known, we could simply start from a random point in $S$ and greedily move in the direction that decreases the objective $f$. In an ideal case and when $S$ forms an $\epsilon$-cover of the space for a small $\epsilon$, this greedy procedure will most likely find the global minima of problem~\ref{eq:opt}. 
\todoy{say that distance function is convex and we expect this to work in the continuous space. Big data problems are like continuous problems...}

The above discussion motivates constructing a proximity graph in the training phase, and performing a hill-climbing search in the test phase. Let $N$ be a positive integer. Let $\cG$ be a proximity graph constructed on $S$ in an offline phase, i.e. set $S$ is the set of nodes of $\cG$, and each point in $S$ is connected to its $N$ nearest neighbors with respect to some distance metric. %As we will see, solving problem~\eqref{eq:opt} is easier if the objective $f$ is ``regular enough'' with respect to $\cG$.  
%We might need to construct the neighborhood structure $\cG$ in an offline phase. This is the case in the nearest neighbor search application where we use $K$-nearest neighbor graphs. 
Given $\cG$, we solve problem~\eqref{eq:opt} by starting from an arbitrary node  and moving in the direction that decreases the objective $f$. The idea of performing hill-climbing on a proximity graph dates back at least to~\citet{Arya-Mount-1993}, and it has been further improved by \citet{Brito-1997, Eppstein-1997, Miller-Teng-1997, Plaku-Kavraki-2007, Chen-Fang-Saad-2010, Connor-Kumar-2010, Dong-Charikar-Li-2011, Hajebi-2011, Wang-Wang-2012}. 

The greedy procedure for solving problem~\eqref{eq:opt} can get stuck in local minima.  We apply our proposed continuation technique to solve problem~\eqref{eq:opt} on a proximity graph. The continuation method is less likely to get stuck in local minima, and demonstrates improved performance in practice. The resulting algorithm is a greedy method with added random walks that smoothen the optimization function. %The result is an improved nearest neighbor algorithm. 
%The resulting algorithm is a greedy procedure on a smoothed function. 
%The smoothing step helps avoiding local minima. 
 \todoy{give a high level explanation of the smoothing method}
 The exact algorithm is described in Section~\ref{sec:method}. %It involves a smoothing technique so that the objective function is more \textit{convex}.
We show advantages of the proposed technique in two image classification datasets. The primary objective of our experiments is to show how the proposed method can take advantage of larger training datasets. Our current performance results are not competitive with state-of-the-art deep neural networks, although we believe further study and engineering of the proposed architecture might close this gap. 
%Our focus in these experiments, 

%The standard approach is to perform a greedy search on the proximity graph. The result of the continuation approach is a greedy method along with random walks that smoothen the optimization function. The result is an improved nearest neighbor algorithm. 

Deep neural networks have shown effectiveness in many machine learning problems. In many successful deep architectures, the number of parameters is about the same or even exceeds the size of training dataset. These overparameterized networks are often trained to achieve a very low training error in a time consuming training phase~\citep{Salakhutdinov-2017,Ma-Bassily-Belkin-2017,Zhang-Bengio-2017}. Interestingly, such networks often provide low test error. 
%\citet{Zhang-Bengio-2017} observe that deep learning methods can fit to random labels. 

%\todoy{talk about deep learning, advantage and disadvantages}

Certain properties of deep learning methods, such as using large models and having small training error, bring to mind nonparametric techniques such as nearest neighbor and kernel methods that store the whole dataset.
%A supervised learning technique that uses large models and has small training error is similar to nonparametric techniques such as nearest neighbor and kernel methods that store the whole dataset. 
Although nonparametric methods are often competitive in terms of sample efficiency, their computational complexity in the test phase has been a major obstacle in their applicability in big data problems. 
%For a training set of size $n$, the computational complexity in the test phase can be $O(n)$, which is unacceptable for very large $n$. 
Another limitation of nonparametric methods is their reliance on an appropriate distance metric; the choice of distance metric can have a significant impact in the performance of the method. In contrast, deep learning methods are shown to be able to learn appropriate feature embeddings in a variety of problems. Further, deep learning methods are fast in the test phase, although the training phase can be computationally demanding.  
%Such nonparametric methods are often competitive in terms of sample efficiency, but unlike deep learning methods, the running time in the test phase can be prohibitively large. This computational issue has been a major obstacle in applicability of nonparametric methods in big data problems. 
There have been a number of attempts to scale up kernel methods and make them competitive with deep neural nets~\citep{Williams-Seeger-2001, Rahimi-Recht-2008, Le-Sarlos-Smola-2013, Bach-2014, Lu-Sha-2014, Lu-Sha-2016}. 
Despite these efforts, there still appears to be a gap in the performance of nonparametric methods and deep learning techniques. Our approach can be viewed as a step towards closing this gap. 
%The primary objective of our work is showing data structures for nearest neighbor search that are more appropriate for big data applications, rather than engineering the architecture to match deep learning performance results.    

%We are interested in approximate nearest neighbor methods that are particularly appropriate for large scale problems. 
%We show one such method and argue about its effectiveness when the  
The resulting nearest neighbor classification method has a number of features. First, the method is intuitive and easy to implement. Second, and most importantly, the computational complexity in the test phase scales gracefully with the size of dataset. Finally, in big data problems, the method can work well with simple metrics such as Euclidean distance. 

The continuation method can also be applied to other discrete optimization problems that can be formulated as optimizing a function on a graph. 
%For example, consider the nearest neighbor search problem. Given a set of points $S$, a distance function $d:S\times S\ra\Real$, and a query $q\in S$, we are interested in solving $\argmin_{x\in S} d(x, q)$ in sublinear time. 
For example, in an A/B/$n$ test with large $n$, we might be interested in finding the version of a web page that has the highest conversion rate. Here, each web page version is a point on a graph and it is connected to versions that are most similar to it (they might vary in only one variable). In this problem, the similarity graph is implicitly given and we can efficiently find the neighbors of each given node. %In many industry applications, only some historical data is available and online evaluation of different versions is not permitted. 
Using a continuation method is specially relevant in this problem, given that the feedback is binary, and we might have observed a small number of feedbacks for each version.

%Also discuss potential applications in A/B/N tests. We cannot do bandits. We have to do offline evaluation. It makes sense to do the smoothing, so that we return a more robust solution...

%Given that it is a search over graph, many interesting and exciting opportunities from random walks is available...

%as points have neighbors close by in big data problems. 

In summary, we make the following contributions. First, we show a continuation method for discrete optimization. We show that the method is the best affine approximation of a transformation that is guaranteed to make the function HCF and to have the same global minima as the original function. Second, we apply the method to a class of graph-based nearest neighbor optimization problems, and show the effectiveness of the proposed continuation method. %We improve the basic nearest neighbor algorithm by the smoothing technique. 
Finally, we show that the graph-based nearest neighbor approach is particularly appropriate for big data problems which is a challenging area for many nonparametric methods. This approach is particularly appealing given the success of deep learning methods with large models. %so maybe there is a hope to make nonparametric methods as efficient. 

%We make two contributions. First, we improve the basic algorithm by an extra smoothing technique. Next, we show the effectiveness of the method in supervised learning problems and as the size of training set becomes larger. The smoothing method on its own provides a promising approach to general discrete optimization.

%%%%%%%%%%%%%%%%%%%%%%%%%%%%%%%%%%%%%%%%
\section{Other Related work}

 %and are very fast in the test phase

%\todoy{Fitting to random nn paper... they obtain zero training error. Misha Belkin also says that DNN methods have zero training error... }

%\todoy{recent work on speeding up kernel methods for large scale problems: sub-sampling the kernel matrix, random projections papers Ben Recht and Ali, Shei Fa papeprs, Alex Smola ? Misha Belkin}

%The problem of minimizing continuous non-convex functions has been studied for decades.
%Our method optimizes a function defined on a graph that is constructed on training data. The second contribution of this paper is a smoothing technique for solving discrete optimization problems. Our proposed approach is closely related to the approach of \citet{mobahi2015link} that study the continuous problem and find an approximate solution through successive smoothing. We extend this approach to discrete problems and show its effectiveness in nearest neighbor search and classification. 

The proposed continuation method can be adapted to solve a broad range of problems that involve optimizing a function over a discrete space. A popular approach to discrete optimization is simulated annealing.\todoy{reference} %For example, see a recent paper by \citet{Bouttier-Gavra-2017}. 
Simulated annealing is a Markov Chain Monte Carlo method and considers only the immediate neighbors when making a move. Because of this local property, the algorithm might miss a good direction if the immediate neighbor in that direction has a large value. In contrast, the proposed smoothing approach can infer the global structure of the objective function through longer random walks. In fact, and as we observe in experiments, even random walks of length 1 or 2 can improve the performance as compared to simulated annealing. \todoy{really?!} Other popular methods for solving general optimization problems are evolutionary (genetic) algorithms and the cross-entropy method.  \todoy{references}

%The proposed method can be considered as the discrete counter-part of continuation methods that are applied to continuous optimization~\citep{mobahi2015link}. \todoy{talk about continuation methods} 

%Our method can be adapted to solve a broad range of problems that involve optimizing a function over a discrete space. %Additionally, our approach can be understood as a convex relaxation, and this provides some theoretical justification. 

%One central application of the proposed method is in the nearest neighbor problems. 
We apply the continuation method to graph-based nearest neighbor search. 
We compare the resulting algorithm with two well known nearest neighbor algorithms, namely KDTree and Locality-Sensitive Hashing (LSH) algorithm, on classification problems. The KDTree or k-dimensional tree is a space partitioning binary tree that is used for organizing a set of k-dimensional points in the Euclidean space~\citep{Friedman-Bentley-Finkel-1977,Bentley-1980}. Each leaf node of the KDTree represents a point and each non-leaf node represents a set of points (leafs below that node) and a hyperplane that splits those points into two equal parts. The KDTree is often used to find the nearest neighbor of a query among a set of points. Another popular nearest neighbor search algorithm is the LSH algorithm which uses a hashing technique to map similar input points to the same ``bucket'' with high probability~\citep{Indyk-Motwani-1998}. This hashing mechanism is  different than the conventional hash functions as it tries to maximize the chance of ``collision'' for similar points. 
%\todoy{references for kdtree, LSH, genetic algorithms, cross entropy method, ...}

%\todoy{talk about earlier work on graph-based nearest neighbor search algorithms...}

%The proposed supervised learning method heavily relies on a graph-based nearest neighbor search technique. Graph-based nearest neighbor search methods have been studied extensively~\citep{Arya-Mount-1993, Brito-1997, Eppstein-1997, Miller-Teng-1997, Plaku-Kavraki-2007, Chen-Fang-Saad-2010, Connor-Kumar-2010, Dong-Charikar-Li-2011, Hajebi-2011, Wang-Wang-2012}. \citet{?} show an analysis ... These methods are less popular compared to KDTree and LSH algorithms. We argue that graph-based nearest neighbor methods are particularly well-suited to big-data problems.  

%%%%%%%%%%%%%%%%%%%%%%%%%%%%%%%%%%%%%%%%
\section{The Continuation Method}
\label{sec:method}

%%%%%%%%%%%%%%%%%%%%%%%%%%%%%%%%%%%%%%%%
%\subsection{Continuation Method}
%\label{sec:smoothing}

%As we will see, solving problem~\eqref{eq:opt} is easier if the objective $f$ is ``regular enough'' with respect to $\cG$.

%Continuation methods have shown promising results in continuous non-convex optimization problems. In the next section, we show a continuation method for optimizing $f$. 

%\todoy{in practice, we continue the hill-climbing for a fixed number of rounds, rather than stopping at a local minima. also in each step, we store the neighbors of the current node in a list, and return the $k$-nearest neighbors at the end.}

The basic idea of the continuation method is to deform the objective function such that minimizing the deformed function is easier. To minimize the more complicated function, we instantiate a hill-climbing method at the solution of the simpler function. To be more precise, given a graph $\cG$ with the set of nodes $S$, and a function $f:S \ra \Real$, we are interested in finding a point in $S$ that minimizes $f$. Let $T$ be a sufficiently large number. For $i\in \{1,\dots, T\}$, we define a sequence of functions $\widehat f(., i): S \ra\Real$ such that if $i>j$, then $\widehat f(.,i)$ is more \emph{hill-climbing friendly} than $\widehat f(.,j)$. We let $\widehat f(.,0)=f$. First, starting at a randomly selected point in $S$, we minimize $\widehat f(.,T)$ by a hill-climbing method. Let the minimum be $x_T$. Then, starting at $x_T$, we minimize $\widehat f(.,T-1)$ by a hill-climbing method. We continue this process until we have found $x_0$, which is a local minimum of $\widehat f(.,0) = f$. We construct the smooth function $\widehat f(.,i)$ by performing random walks of length $i$ from each point. %The procedure is shown in Figures~\ref{alg:opt} and \ref{alg:hillclimbing}. 
As we will show, this procedure is an affine approximation of a smoothing operation that transforms $f$ to a HCF function. Given an approximation $\widehat f(.,i)$, any hill-climbing method such as simulated annealing can be used to solve the minimization sub-problem. The procedure is shown in Figures~\ref{alg:opt} and \ref{alg:hillclimbing}.

\begin{figure}
\begin{center}
\framebox{\parbox{8cm}{
\begin{algorithmic}
\STATE \textbf{Input: } Number of rounds $T$. 
\STATE Initialize $x_T = $ random point in $S$. 
\FOR{$t:=T,T-1,\dots, 1$}
\STATE $x_{t-1} = \textsc{Hill-Climbing}(x_t, t)$
\ENDFOR
\STATE Return $x_0$
\end{algorithmic}
}}
\end{center}
\caption{The Optimization Method}
\label{alg:opt}
\end{figure}

\begin{figure}
\begin{center}
\framebox{\parbox{8cm}{
\begin{algorithmic}
\STATE \textbf{Input: } Starting point $x\in S$, length of random walks $s$. 
\WHILE{$x$ not local minima}
\STATE Perform a random walk of length $s$ from the starting state $x$. Let $y$ be the stopping state.
\STATE Let $\widehat f(x, s) = f(y)$ 
\FOR{$x'\in N_x$}
\STATE Perform a random walk of length $s$ from the starting state $x'$. Let $y'$ be the stopping state.
\STATE Let $\widehat f(x', s) = f(y')$.
\ENDFOR
\STATE Update $x = \argmin_{z\in \{x\}\cup N_x} \widehat f(z, s)$  
\ENDWHILE
\STATE Return $x$
\end{algorithmic}
}}
\end{center}
\caption{The \textsc{Hill-Climbing} Subroutine}
\label{alg:hillclimbing}
\end{figure}

Next, we describe the intuition behind the continuation method. In the ideal case and when the function $f$ is HCF, a hill-climbing method finds the global minima. Unfortunately, in many real-world applications, function $f$ is not HCF. Next, we show a transformation of $f$ to a HCF function. Let $N_i\subset S$ be the set of neighbors of node $i\in S$ in $\cG$ and $c\in\Real$ be a non-negative constant. Consider a sequence of functions defined by, $\forall i\in S$, 
\begin{align*}
f(i, 0)&=f(i)\,, \\
f(i, t+1)&=\min( f(i, t),  h(i,t) ) \,,
\end{align*}
where 
\[
h(i,t) = \min_{u\in N_i} f(u,t) + c \;.
\]
Function $f(i, t)$ represents the value of node $i$ at smoothing round $t$. Function $f(i,t)$ is \emph{more hill-climbing friendly} for larger $t$. Let $D$ be the diameter of the graph. We can see that for small enough $c$, the above sequence will eventually produce a HCF function after no more than $D$ steps. Further, the optimum point will not change after this transformation. Let $t$ be large enough such that $f(.,t)$ is HCF. Given that $f(.,t)$ is HCF and $\argmin_{i\in S} f(i) \in \argmin_{i\in S} f(i,t)$, we only need to find the minima of $f(.,t)$. 

%Our approach is based on a linear approximation of a smooth continuation method that transforms $f$ into a convex function. If $f$ is nearly convex, then we hope that the optimum of $f$ and the approximation are close to each other.  

Unfortunately, computing  $f(.,t)$ can be computationally expensive. We show that an approximation of $f(.,t)$ however can be computed efficiently. 
\begin{thm}
Let $\cN$ be the operator defining the above sequence; $f(i,t+1) = \cN\{f(.,t)\}(i)$. The best affine approximation of $\cN$ can be computed by performing a random walk of length 1 on the graph.
%We show that a linear approximation of $\cN$ can be computed efficiently. In fact, it turns out that we can compute the linear approximation by performing random walks on the graph. 
\end{thm}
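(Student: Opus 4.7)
The plan is to view $\mathcal{N}$ as a (nonlinear) map on the finite-dimensional vector space $\R^{|S|}$ and search for the best affine surrogate of the form
\[
\widetilde{\mathcal{N}}\{f\}(i) \;=\; \sum_{j\in S} W_{ij}\,f(j) \;+\; b_i,
\]
where ``best'' is measured coordinatewise as expected squared error under a symmetric prior on $f$ (the natural choice, and the one that mirrors \citet{mobahi2015link} in the continuous case, is $f(j) \stackrel{\text{iid}}{\sim} \cN(\mu,\sigma^2)$, so that the best affine predictor at coordinate $i$ is the orthogonal projection of $\mathcal{N}\{f\}(i)$ onto the affine span of the inputs). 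Since the coordinates of $f$ are independent, the optimal coefficients reduce to normalized covariances
\[
W_{ij} \;=\; \frac{\Cov\!\big(f(j),\,\mathcal{N}\{f\}(i)\big)}{\sigma^2},
\qquad
b_i \;=\; \E\!\big[\mathcal{N}\{f\}(i)\big] - \sum_j W_{ij}\,\E[f(j)].
\]

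The first key step is a \emph{locality} observation: because $\mathcal{N}\{f\}(i)$ is a function of only $f(i)$ and $\{f(u):u\in N_i\}$, independence of the remaining coordinates immediately forces $W_{ij}=0$ whenever $j\notin\{i\}\cup N_i$. This already shows the affine approximation has the combinatorial sparsity pattern of the graph's one-step neighborhood, which is exactly the support of a length-$1$ random walk from $i$.

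The second key step is a \emph{symmetry} argument on the nonzero entries. Within the neighborhood $N_i$, the random variables $\{f(u):u\in N_i\}$ are exchangeable, and $\mathcal{N}\{f\}(i)=\min\!\big(f(i),\,c+\min_{u\in N_i}f(u)\big)$ is a symmetric function of them. Hence $\Cov(f(u),\mathcal{N}\{f\}(i))$ takes a common value $\alpha$ for every $u\in N_i$, while $\Cov(f(i),\mathcal{N}\{f\}(i))$ takes a (possibly different) value $\beta$. Therefore $W$ has the form $W_{iu}=\alpha/\sigma^2$ for $u\in N_i$ and $W_{ii}=\beta/\sigma^2$; after dividing by $\alpha |N_i|+\beta$ (which is absorbed into $b_i$), the row is precisely the transition distribution of a length-$1$ lazy random walk from node $i$ over $\{i\}\cup N_i$. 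Computing $b_i$ from the expectation formula just adds a node-dependent constant that can be evaluated from $f$ values sampled along the walk, which matches the update $\widehat{f}(\cdot,1)=f(y)$ in Figure~\ref{alg:hillclimbing}.

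The main obstacle is the second step: the covariances $\alpha$ and $\beta$ involve expectations of the minimum of correlated Gaussians shifted by $c$, and while closed-form evaluation is not needed to identify the random-walk \emph{structure} (sparsity and within-neighborhood equality), one must verify that the resulting signs and magnitudes are consistent with a bona fide random walk (nonnegative transition weights summing to one after the normalization folded into $b_i$). A short calculation using the identity $\min(a,b)=\tfrac{1}{2}(a+b-|a-b|)$ and standard Gaussian mean-absolute-value formulas shows $\alpha,\beta>0$, completing the identification. Finally, iterating the same argument $t$ times (or noting that compositions of affine operators are affine) yields the random-walk-of-length-$t$ approximation used to produce $\widehat{f}(\cdot,t)$ in Algorithm~\ref{alg:hillclimbing}, as claimed.
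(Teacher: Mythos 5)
Your route is genuinely different from the paper's. The paper replaces the hard $\min$ by a soft-min with temperature $\lambda$, takes the first-order (Gateaux) expansion of the resulting operator at the zero function $f^*\equiv 0$ in the direction $\phi=f$, and then sends $c\to 0$; the derivative collapses to the uniform average $\bigl(f(i)+\sum_{u\in N_i}f(u)\bigr)/(1+m)$, which is identified with the lazy transition matrix $P=(D+I)^{-1}(A+I)$. You instead interpret ``best affine approximation'' as the $L^2$ projection of $\cN\{f\}(i)$ onto affine functions of $f$ under an i.i.d.\ Gaussian prior, and you extract the random-walk structure from locality (zero covariance outside $\{i\}\cup N_i$) and exchangeability (equal covariance within $N_i$). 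Your notion of ``best'' is arguably the more faithful analogue of the continuous-domain construction this result is said to parallel, and it avoids the paper's ad hoc soft-min surrogate; the paper's calculation, in exchange, lands directly on the specific matrix $P$ that the algorithm implements.

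Two gaps need closing. First, your symmetry argument yields two covariance values, $\beta=\Cov\bigl(f(i),\cN\{f\}(i)\bigr)$ and $\alpha=\Cov\bigl(f(u),\cN\{f\}(i)\bigr)$ for $u\in N_i$, and these are \emph{not} equal when $c>0$: the $+c$ shift breaks the symmetry between $i$ and its neighbors (as $c\to\infty$, $\beta\to\sigma^2$ while $\alpha\to 0$). So you obtain a lazy walk with a $c$-dependent holding probability $\beta/(\beta+m\alpha)$, not the uniform $1/(1+m)$ walk of the theorem's $P$; to recover the latter you need the same $c\to 0$ limit the paper takes explicitly, under which $\cN\{f\}(i)$ becomes the fully symmetric minimum over $\{i\}\cup N_i$. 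Second, dividing the row by $\alpha\,|N_i|+\beta$ cannot be ``absorbed into $b_i$'': rescaling the linear part is multiplicative and does not move into the additive intercept. What is true, and sufficient for the optimization, is that the best affine approximation is a \emph{positive affine transformation} of the one-step expectation $\E{}\bigl[f(X_1)\mid X_0=i\bigr]$, which preserves minimizers. In fact, at $c=0$ both gaps close at once via Stein's lemma: $\Cov\bigl(f(j),\min_{k}f(k)\bigr)=\sigma^2\,\P\bigl(j=\arg\min\bigr)=\sigma^2/(m+1)$ for each $j\in\{i\}\cup N_i$, so $W_{ij}=1/(m+1)$ exactly and no normalization is needed -- this recovers the paper's $P$ and also supplies the positivity you only sketch.
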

\begin{proof}
Before giving details, we define some notation. Let $\cH$ be the class of real-valued functions defined on $S$. We say an operator $\cL$ is linear if and only if $\forall h_1 \in \mathcal{H}, \forall h_2 \in \mathcal{H}, a\in \mathbb{R}, b\in \mathbb{R}; \cL\{ah_1+bh_2\}=a\mathcal{L}\{h_1\}+b\mathcal{L}\{h_2\}$. Consider function $h\in\cH$ and suppose it is a small perturbation of some function $h^*$ in the direction $\phi\in\cH$, that is,
\[
h=h^*+\epsilon \phi
\]
for some small scalar $\epsilon$. Suppose $\mathcal{N}\{h^*+\epsilon \phi\}$ is differentiable in direction $\phi$ so that its first order expansion is 
\begin{align*}
\cN\{h\} &= \cN\{h^*+\epsilon\phi\} \\
&=\cN\{h^*\}+\epsilon\left(\frac{d}{d\epsilon}\cN\{h^*+\epsilon\phi\}\right)_{|\epsilon=0}+o(\epsilon) \;.
\end{align*}
Letting $u_1, u_2, \ldots, u_m$ be the neighbors of node $i$, we have,
\begin{align*}
\cN\{f\}(i)&=\min\{f(i), f(u_1)+c, f(u_2)+c, \ldots\} \\
&\approx \frac{f(i)e^{-\lambda f(i)}+(f(u_1)+c)e^{-\lambda(f(u_1)+c)}+\ldots}{e^{-\lambda f(i)}+e^{-\lambda (f(u_1)+c)}+\ldots}
\end{align*}
for sufficiently large $\lambda>0$. 
Let $f^*$ be the origin in the function space, i.e. $f^*(i)=0$ for any $i$. We can then write 
\begin{align*}
\cN\{f\} &= \cN\{f^*+\epsilon\phi\} \\
&\approx \cN\{f^*\}+\epsilon\left(\frac{d}{d\epsilon}\cN\{f^*+\epsilon\phi\}\right)|_{\epsilon=0}+o(\epsilon)\\
&= \cN\{f^*\}+\epsilon\left(\frac{d}{d\epsilon}\cN\{\epsilon\phi\}\right)|_{\epsilon=0}+o(\epsilon) \;.
\end{align*}
We have
\begin{align*}
&\cN\{\epsilon\phi\}(i)=\\
&\quad\frac{\epsilon\phi(i)e^{-\lambda\epsilon\phi(i)}+(\epsilon\phi(u_1)+c)e^{-\lambda(\epsilon\phi(u_1)+c)}+\ldots}{e^{-\lambda\epsilon\phi(i)}+e^{-\lambda(\epsilon\phi(u_1)+c)}+\ldots}
\end{align*}
By taking derivatives and setting $\epsilon=0$ we will have,
\begin{align*}
&\frac{d}{d\epsilon}\cN\{\epsilon\phi\}(i)|_{\epsilon=0} = \\
&\quad\frac{(\phi(i)+\sum_{j=1}^{m}\phi(u_j)e^{-\lambda c}(1-\lambda c))(1+m e^{-\lambda c})}{(1+m e^{-\lambda c})^2}\\ 
&\qquad+ \frac{m c e^{-\lambda c}\lambda ( \phi(i)+\sum_{j=1}^{n}\phi(u_j)e^{- \lambda c})}{(1+m e^{-\lambda c})^2} \;.
%\frac{\phi(i)+\frac{1}{2}(\phi(u_1)+\phi(u_2))+\ldots}{2+C(n, 2)}
\end{align*}
Thus
\[
\lim_{c\to 0} \frac{d}{d\epsilon}\mathcal{N}\{\epsilon\phi\}(i)|_{\epsilon=0} = \frac{\phi(i)+\sum_{j=1}^{m}\phi(u_j)}{1+m} \;.
\]
Therefore,
\begin{align*}
\mathcal{N}\{f\}(i)&\approx\epsilon \frac{\phi(i)+\sum_{j=1}^{m}\phi(u_j)}{1+m}\\
&=\frac{f(i)+\sum_{j=1}^{m}f(u_j)}{1+m} \;.
\end{align*}
Let $A\in\Real^{n\times n}$ be the adjacency matrix underlying graph $\cG\in\Real^{n\times n}$ and $D$ be the diagonal matrix representing the degrees of the nodes in the graph. Let $P\in \Real^{n\times n}$ be the stochastic transition matrix defined by 
\[
P=(D+I)^{-1}(A+I) \;. 
\]
We observe that
\[
(Pf)(i) = \frac{f(i)+\sum_{j=1}^{n}f(u_j)}{1+n} \;.
\]
Therefore, if we approximate $\cN$ by its first order expansion, 
\begin{align*}
\widehat f(., 0)&=f\,, \\
\widehat f(., t+1)&= P \widehat f(.,t) \;.
\end{align*}
Applying $P$ to the function $f$ is equivalent to doing a random walk on the nodes of the graph according to $P$. Therefore, $\widehat f(., t+1)$ can be obtained by simulating a random walk of length $t$ and returning value of $f$ at the stopping point.  
\end{proof}

We summarize the optimization method as follows. Let $T$ be a sufficiently large number. First, we find the local minima of $\widehat f(., T)$. This is obtained by running random walks of length $T$. Then, starting at this local minima, we find the local minima of $\widehat f(., T-1)$. We continue this process until we find the local minima of $\widehat f(., 0)$, which is returned as the approximate minimizer of $f$.

%%%%%%%%%%%%%%%%%%%%%%%%%%%%%%%%%%%%%%%%
\subsection{Graph-Based Nearest Neighbor Classification}
\label{sec:graph-nn}

%\todoy{two parameters: number of neighbors in the graph, and number of random restarts. One is an offline parameter and the other is online. The number of nearest neighbors returned is related to bias variance tradeoff, but this is different than the number of neighbors in the graph. Smaller number of neighbors: lighter offline computation, heavier online computation. But having a very large neighbors also make the online computation heavy... but an appropriate value can make the online computation faster. Having an experiment with a different number of neighbors. This is another way of making the offline computation heavier so that online computation is faster. }

The basic idea is to construct a data structure that allows for fast test phase nearest neighbor computation. First, we explain the construction of the data structure, and then we will discuss the search problem on this data structure. 
We construct a proximity graph over training data. Let $N$ be a positive integer. Let $\cG$ be a proximity graph constructed on $S$ in an offline phase, i.e. set $S$ is the set of nodes of $\cG$, and each point in $S$ is connected to its $N$ nearest neighbors with respect to some distance metric. In our experiments, we use the Euclidean metric. This data structure is particularly well suited to big-data problems. The intuition is that, in big-data problems, points will have close neighbors, and so even a simple metric such as Euclidean metric should perform well. 

\todoy{discuss construction of this data structure, complexity, efficient methods, ...}

Let $f = d(.,y)$ be the optimization objective. Given the graph $\cG$, the problem is reduced to minimizing function $f$ over a graph $\cG$. 
%Difficulty of this optimization problem depends on the \textit{convexity} of function $f$. 
The graph based nearest neighbor search has been studied by~\citet{Arya-Mount-1993, Brito-1997, Eppstein-1997, Miller-Teng-1997, Plaku-Kavraki-2007, Chen-Fang-Saad-2010, Connor-Kumar-2010, Dong-Charikar-Li-2011, Hajebi-2011, Wang-Wang-2012}. \citet{Laarhoven-2017} show that the theoretical time and space complexity of graph-based nearest neighbor search can be competitive with hash-based methods in certain regimes. 

%%%%%%%%%%%%%%%%%%%%%%
%\subsection{Practical Implementation}

%We can use different hill-climbing methods in the sub-routine ?. In our experiments, we use the following algorithms. 

We can use a hill-climbing method such as simulated annealing to optimize $f$ over $\cG$. Efficiency of a hill-climbing method depends on the shape of $f$ and connectivity of the graph. The problem is easier when $f$ is HCF enough with respect to the geometry implied by the graph. If $f$ is very irregular, we might need many random restarts to achieve a reasonable solution. Figure~\ref{fig:sparse-graph} shows a nearest neighbor search problem given query point $q$. The nearest neighbor in the graph is $y$. If we start the hill-climbing procedure from $z_0$, we will end up at $z_2$ which is a local minima. Figure~\ref{fig:dense-graph} shows the same problem with a bigger training data. As the graph is more dense, the hill-climbing is more likely to end up in a point that is closer to the query point. Thus, we will need a smaller number of restarts to achieve a certain level of accuracy. 

%\todoy{add a figure: when the training data is not large, the graph might have holes, so we get stuck in local minima.... in big-data problems, the local minima is a less server issue... make the same figure more dense and show that who the issue of local minima goes away...} 

\begin{figure}[!h]
    \centering
    \subfloat[]{{\includegraphics[width=0.35\textwidth]{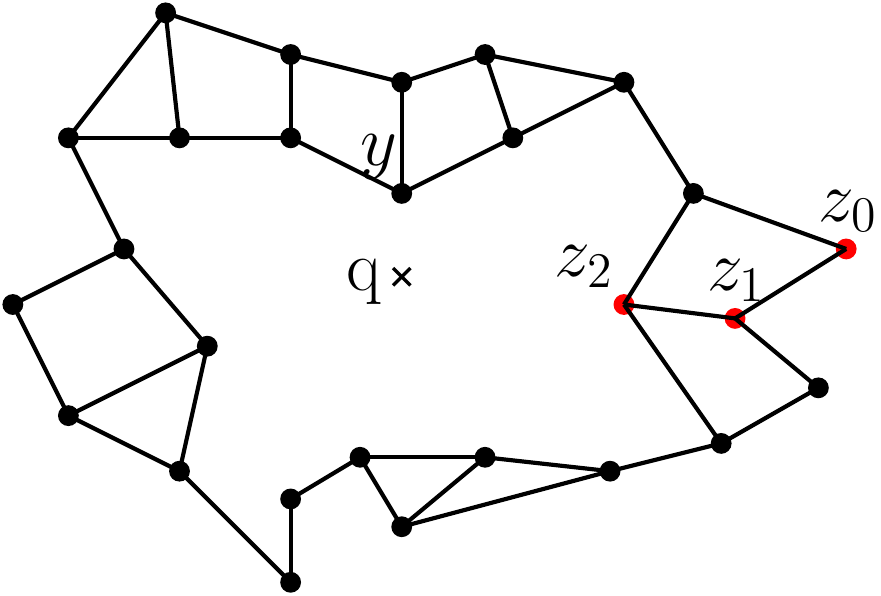} 	}}
    \vspace{0cm}
    \caption{The hill-climbing procedure gets stuck in a local minima in a sparse graph.} \label{fig:sparse-graph}%
\end{figure}

\begin{figure}[!h]
    \centering
    \subfloat[]{{\includegraphics[width=0.35\textwidth]{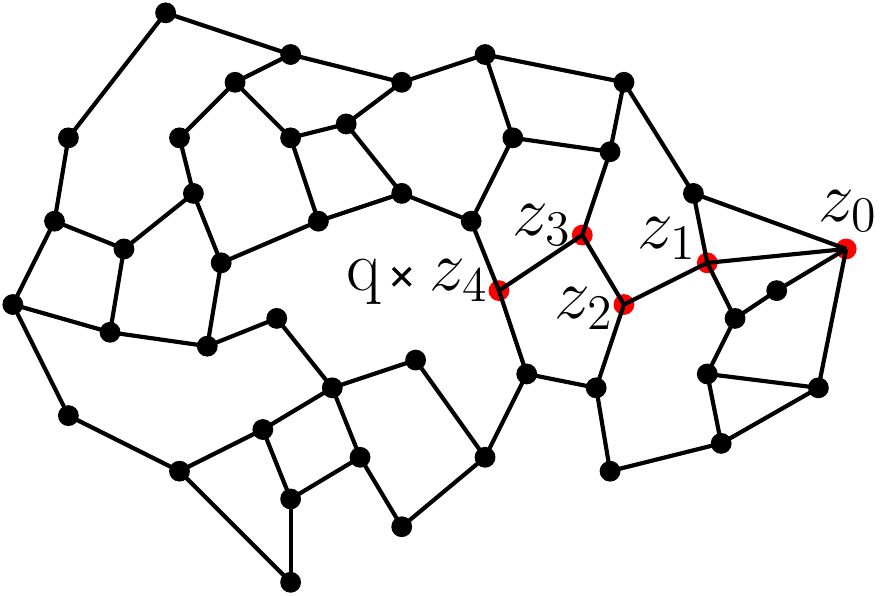} 	}}
    \vspace{0cm}
    \caption{The hill-climbing procedure finds the global minima given a larger training data.} \label{fig:dense-graph}%
\end{figure}

% estimate all neighbors
% simulated annealing
% random walk based method - HnR

%%%%%%%%%%%%%%%%%%%%%%%%%%%%%%%%%%%%%%%%%%%
%\subsection{Computational Complexity}

%The time complexity of our approach is sub-linear which allows its application in large-scale problems. 

%\subsection{Building a graph}
%Our first step is to create a graph by joining some of these nodes together. There are several ways to do this, we can generate a random graph using the Erdos-Renyi model to generate a uniform random graph, or the Barabasi-Albert model to generate scale-free graphs. These graphs have the desirable property of having a small diameter, namely $\log(n)$ in expectation. We can use other ideas such as building the minimum spanning tree over the nodes which can give us some structural advantages.

We apply the continuation method to the problem of minimizing $f$ over graph $\cG$. We call the resulting algorithm SGNN for Smoothed Graph-based Nearest Neighbor search. The algorithm is shown in Figures~\ref{alg:opt-rr} and~\ref{alg:simulatedannealing}. This algorithm has several differences compared to the basic algorithm in the previous section. 
First, the SGNN uses a simulated annealing procedure instead of the hill-climbing procedure. Second, instead of stopping the hill-climbing procedure in a local minima, the SGNN continues for a fixed number of iterations. In our experiments, we run the simulated annealing procedure for $\log n$ rounds, where $n$ is the size of the training set. See Figure~\ref{alg:simulatedannealing} for a pseudo-code. Finally, the SGNN runs the simulated annealing procedure several times and returns the best outcome of these runs. The resulting algorithm with random restarts is shown in Figure~\ref{alg:opt-rr}. %Third, we keep track of neighbors of points that we visit in the hill-climbing procedure.
We will show the performance of the proposed method in two classification problems in the experiments section. 
The above algorithm returns an approximate nearest neighbor point. To find $K$ nearest neighbors for $K>1$, we simply return the best $K$ elements in the last line in Figures~\ref{alg:opt-rr}.

Choice of $N$ impacts the prediction accuracy and computation complexity; smaller $N$ means lighter training phase computation, and heavier test phase computation (as we need more random restarts to achieve a certain prediction accuracy). Having a very large $N$ will also make the test phase computation heavy.

\begin{figure}
\begin{center}
\framebox{\parbox{8cm}{
\begin{algorithmic}
\STATE \textbf{Input: } Number of random restarts $I$, number of hill-climbing steps $J$, length of random walks $T$.
\STATE Initialize set $U=\{\}$
\FOR{$i:=1,\dots,I$}
\STATE Initialize $x = $ random point in $S$. 
\STATE $x' = \textsc{Smoothed-Simulated-Annealing}(x, T, J)$
\STATE $U=U\cup \{x'\}$
\ENDFOR
\STATE Return the best element in $U$
\end{algorithmic}
}}
\end{center}
\caption{The Optimization Method with Random Restarts}
\label{alg:opt-rr}
\end{figure}

\begin{figure}
\begin{center}
\framebox{\parbox{8cm}{
\begin{algorithmic}
\STATE \textbf{Input: } Starting point $x\in S$, number of hill-climbing steps $J$, length of random walks $T$. 
\FOR{$j:=1,\dots,J$}
\STATE Perform a random walk of length $T$ from $x$. Let $y$ be the stopping state.
\STATE Let $\widehat f(x, s) = f(y)$ 
\STATE Let $u$ be a neighbor of $x$ chosen uniformly at random.
\STATE Perform a random walk of length $T$ from $u$. Let $v$ be the stopping state.
\STATE Let $\widehat f(u, s) = f(v)$.
\IF{$f(v)\le f(y)$}
\STATE Update $x = u$  
\ELSE
\STATE Temperature $\tau=1-j/J$
\STATE With probability $e^{(f(y)-f(v))/\tau}$, update $x=u$
\ENDIF
\ENDFOR
\STATE Return $x$
\end{algorithmic}
}}
\end{center}
\caption{The \textsc{Smoothed-Simulated-Annealing} Subroutine}
\label{alg:simulatedannealing}
\end{figure}

%%%%%%%%%%%%%%%%%%%%%%%%%%%%%%%%%%%%%%%%%%%
\section{Experimental Results}

We compared the SGNN method with the state-of-the-art nearest neighbor search methods in two image classification problems. We use $K=50$ approximate nearest neighbors to predict a class for each given query. We used the MNIST and COIL-100 datasets, that are standard datasets for image classification. The MNIST dataset is a black and white image dataset, consisting of 60000 training images and 10000 test images in 10 classes. Each image is $28\times 28$ pixels. The COIL-100 dataset is a colored image dataset, consisting of 100 objects, and 72 images of each object at every 5x angle. Each image is $128\times 128$ pixels, We use 80\% of images for training and 20\% of images for testing. 

We construct a directed graph $\cG$ by connecting each node to its 30 closest nodes in Euclidean distance. For smoothing, we use random walks of length $T=1$. (We will also show results with $T=2$.) For the SGNN algorithm, 
%we choose the number of random restarts so that all methods have similar accuracy. 
the number of hill-climbing steps is $J=\log(\text{training size})$ in each restart. %For Simulated Annealing, we use  20000 restarts, and again in each restart we do a search with length $\log(\text{training size})$. 
%For Random walk: We do a random walk of length log(training size), then do another from the best observed node and keep going. Restart count: 30000.
We pick the number of restarts so that all methods have similar prediction accuracy. The SGNN method with $T=1$ is denoted by SGNN(1), and SGNN with $T=0$, i.e. pure simulated annealing on the graph, is denoted by SGNN(0). 

For LSH and KDTree algorithms, we use the implemented methods in the scikit- learn library with the following parameters. For LSH, we use 
LSHForest with min hash match=4, \#candidates=50, \#estimators=50, \#neighbors=50, radius=1.0, radius cutoff ratio=0.9. For KDTree, we use leaf size=1 and $K$=50, meaning that indices of 50 closest neighbors are returned. The KDTree method always significantly outperforms LSH, so we compare only with KDTree. 

%For the other algorithms, we implemented the whole thing from scratch. Below you can find the details: 

\todoy{have we tried smoothing with longer random walks, say 3? other graphs? other random restarts?}
\todoy{say that we are doing better than simulated annealing...}

Figure~\ref{fig:mnist} (a-d) shows the accuracy of different methods on different portions of MNIST dataset (25\%, 50\%, 75\%, and 100\% of training data). 
Using the exact nearest neighbor search, we get the following prediction accuracy results (the error bands are 95\% bootstrapped confidence intervals): with full data, accuracy is $0.955\pm 0.01$; with 3/4 of data, accuracy is $0.951\pm 0.01$; with 1/2 of data, accuracy is $0.943\pm 0.01$; and with 1/4 of data, accuracy is $0.932\pm 0.01$. As the size of training set increases, the prediction accuracy of all methods improve. Figure~\ref{fig:mnist} (e-h) shows that the test phase runtime of the SGNN method has a more modest growth for larger datasets. In contrast, KDTree becomes much slower for larger training datasets. When using all training data, the SGNN method has roughly the same accuracy, but it has less than 20\% of the test phase runtime of KDTree. 
Figure~\ref{fig:coil} (a-d) shows the accuracy of different methods on different portions of COIL-100 dataset. As the size of training set increases, the prediction accuracy of all methods improve. Figure~\ref{fig:coil} (e-h) shows that the test phase runtime of the SGNN method has a more modest growth for larger datasets. In contrast, KDTree becomes much slower for larger training datasets. When using all training data, the proposed method has roughly the same accuracy, while having less than 50\% of the test phase runtime of KDTree. 

These results show the advantages of using graph-based nearest neighbor algorithms; as the size of training set increases, the proposed method is much faster than KDTree. The proposed continuation method also outperforms simulated annealing, denoted by SGNN(0), in these experiments. The simulated annealing is more likely to get stuck in local minima, and thus it requires more random restarts to achieve an accuracy that is comparable with the accuracy of the continuation method. This explains the difference in the runtimes of SGNN(0) and SGNN(1).  

\begin{figure}%[t]%[!h]
    \centering
    \subfloat[]{{\includegraphics[width=0.3\textwidth]{./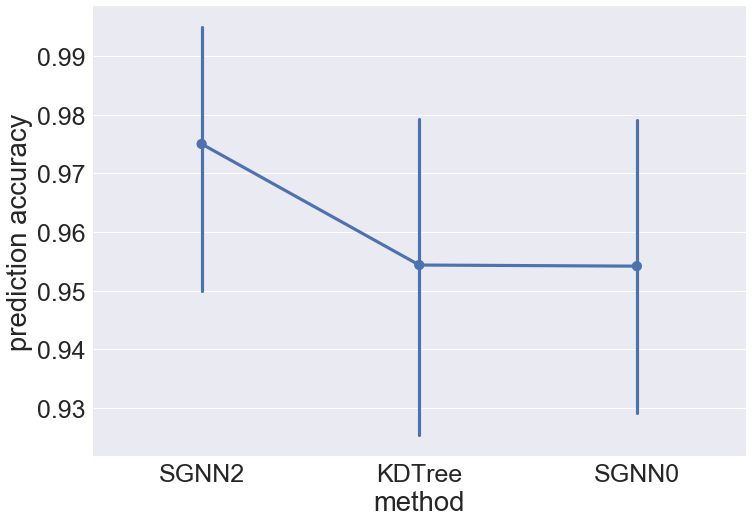} 	}}\\
	\subfloat[]{{\includegraphics[width=0.3\textwidth]{./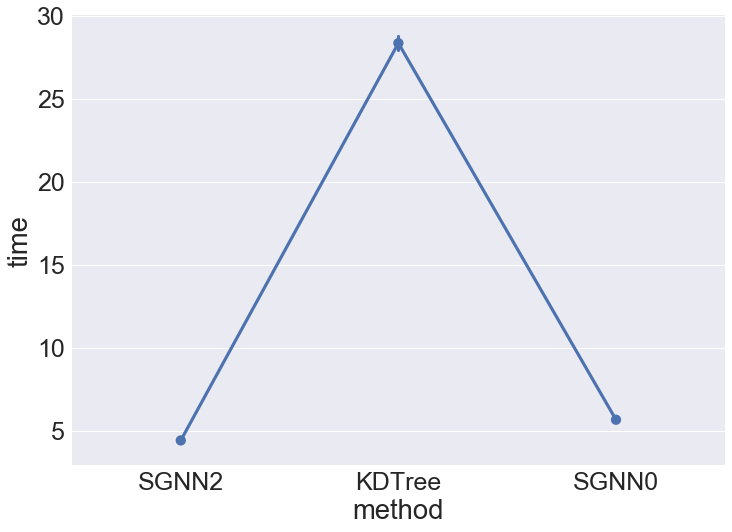} 	}} 	 \\
    \subfloat[]{{\includegraphics[width=0.3\textwidth]{./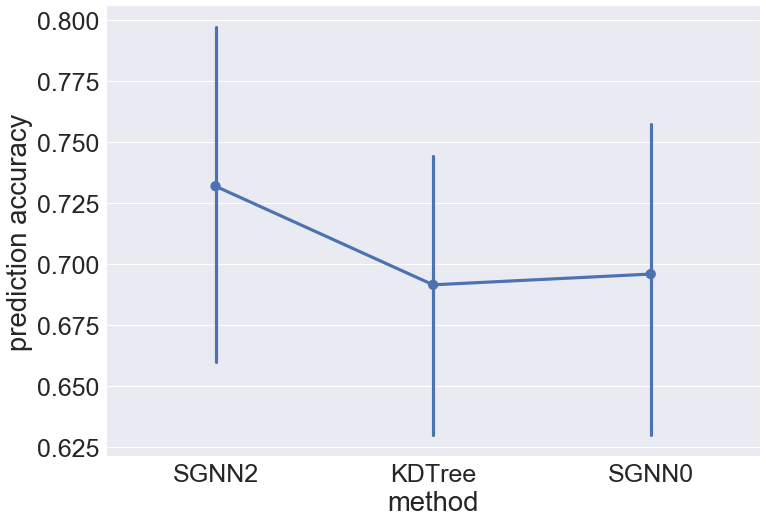} 	}} \\
   \subfloat[]{{\includegraphics[width=0.3\textwidth]{./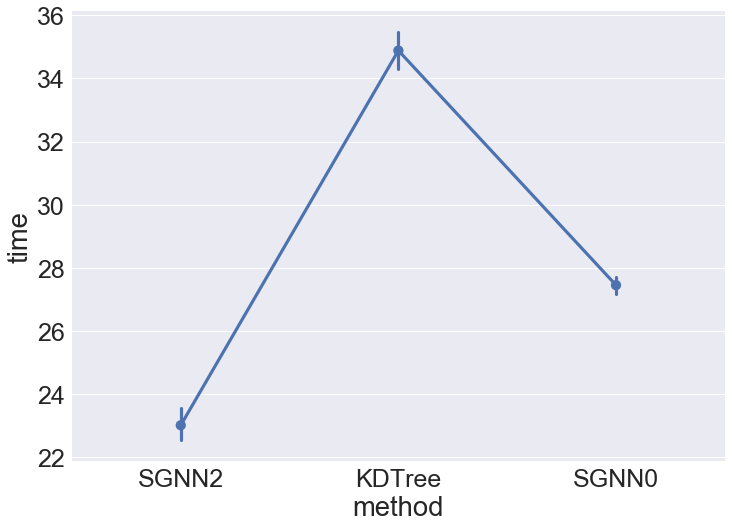} 	}} 
    \vspace{0cm}
    \caption{Prediction accuracy and running time of the SGNN method with random walks of length two (a) Accuracy on MNIST dataset using 100\% of training data. (b) Running time on MNIST dataset using 100\% of training data. (c) Accuracy on COIL-100 dataset using 100\% of training data. (d) Running time on COIL-100 dataset using 100\% of training data.} \label{fig:longwalk}%
\end{figure}

%Although our method is very general and is not specifically designed to solve nearest neighbor problems, we were able to get competitive results with state of the art nearest neighbor search algorithms such as KDTree and LSH. Below the performance of our proposed method (Annealing) and  KDTree are shown. The performance of LSH is much worse and it did not fit in the graph. 

\begin{figure*}%[!h]
    \centering
    \subfloat[]{{\includegraphics[width=0.25\textwidth]{./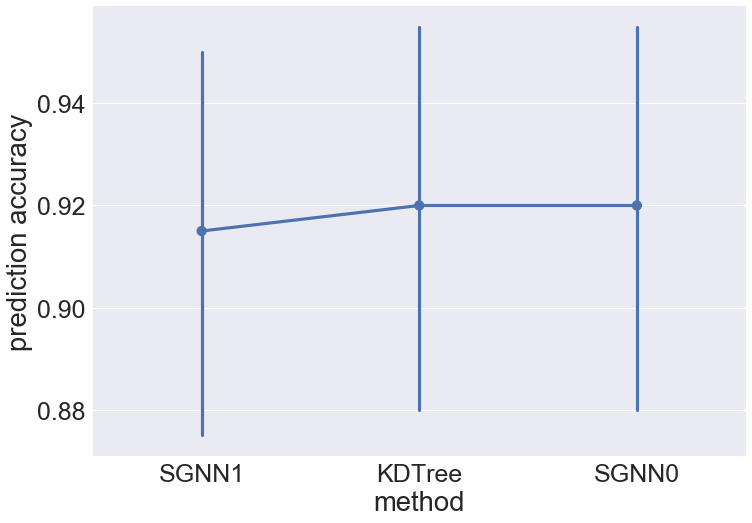} 	}}
	\subfloat[]{{\includegraphics[width=0.25\textwidth]{./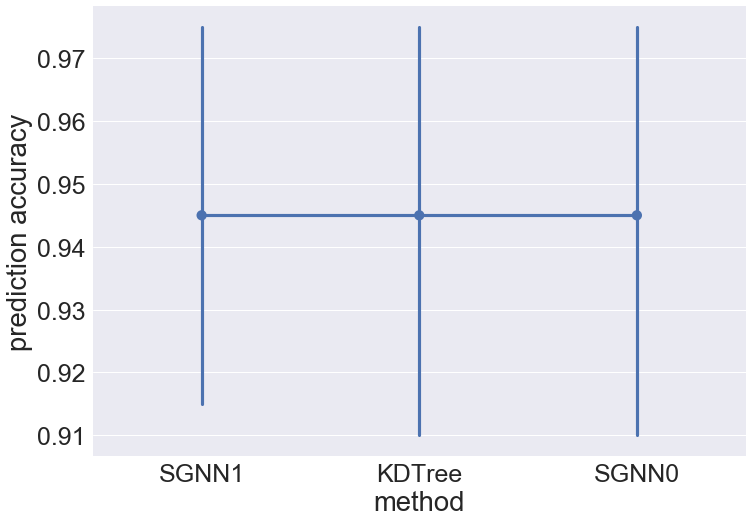} 	}} 	 
    \subfloat[]{{\includegraphics[width=0.25\textwidth]{./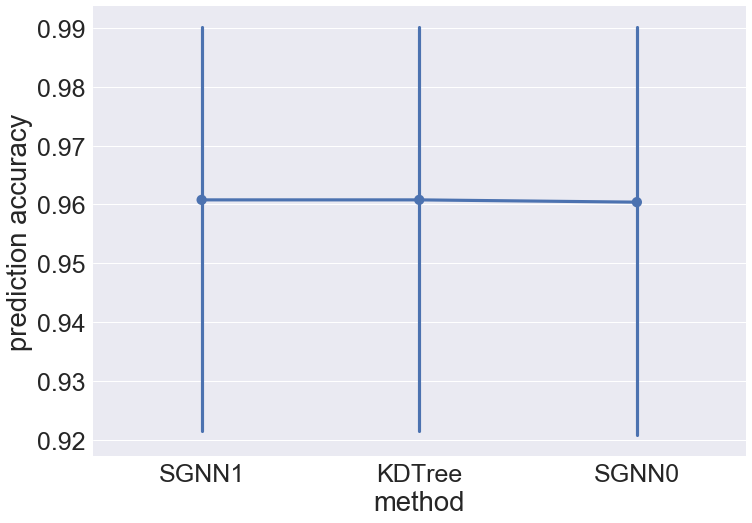} 	}} 
   \subfloat[]{{\includegraphics[width=0.25\textwidth]{./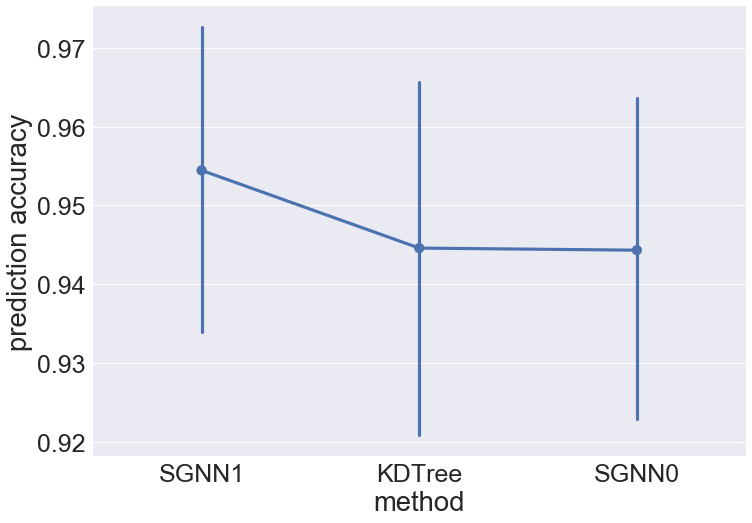} 	}} \\
    \subfloat[]{{\includegraphics[width=0.25\textwidth]{./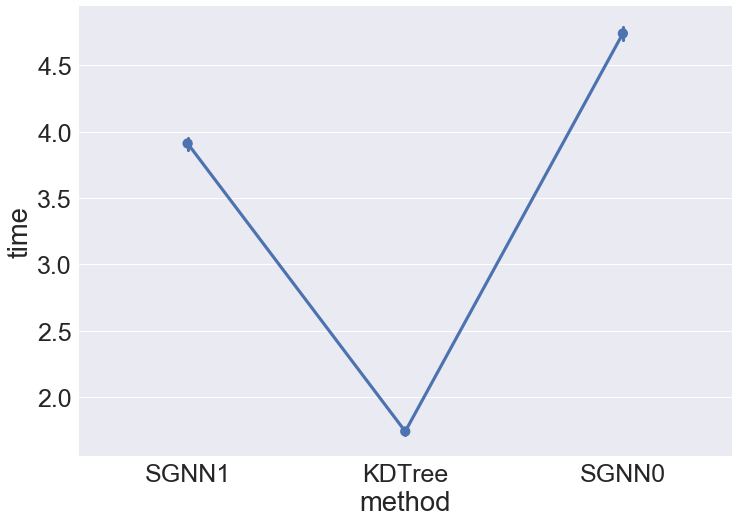} 	}}
	\subfloat[]{{\includegraphics[width=0.25\textwidth]{./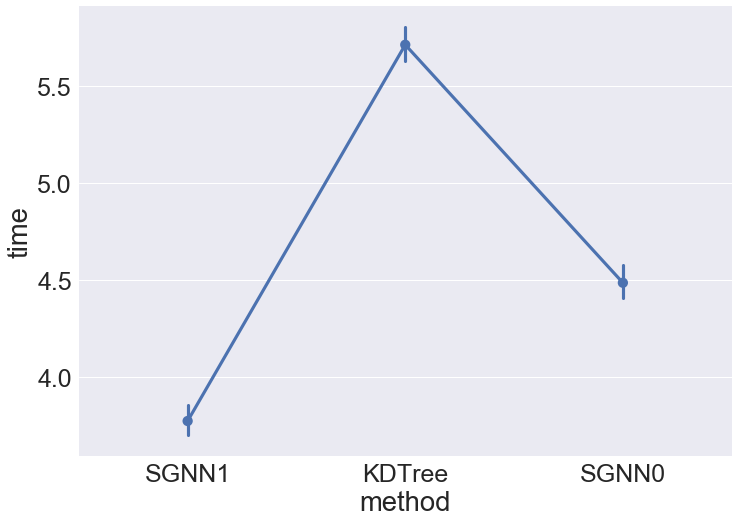} 	}} 	 
    \subfloat[]{{\includegraphics[width=0.25\textwidth]{./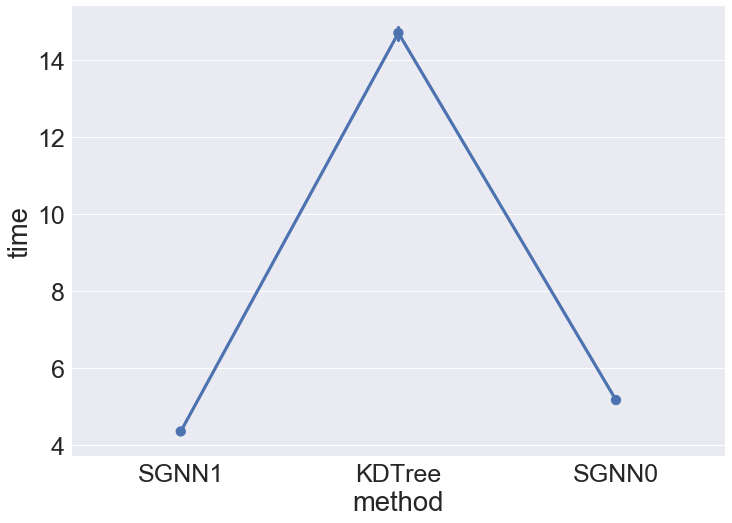} 	}} 
   \subfloat[]{{\includegraphics[width=0.25\textwidth]{./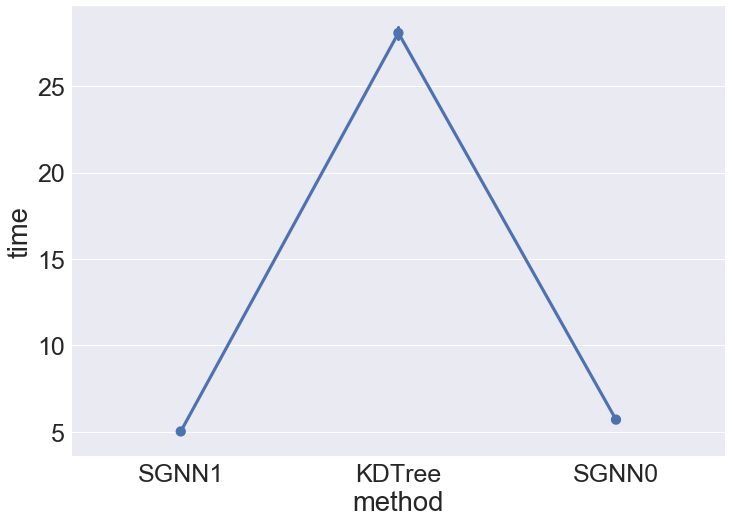} 	}}    
    \vspace{0cm}
    \caption{Prediction accuracy and running time of different methods on MNIST dataset as the size of training set increases. (a,e) Using 25\% of training data. (b,f) Using 50\% of training data. (c,g) Using 75\% of training data. (d,h) Using 100\% of training data.} \label{fig:mnist}%
\end{figure*}

\begin{figure*}%[!h]
    \centering
    \subfloat[]{{\includegraphics[width=0.25\textwidth]{./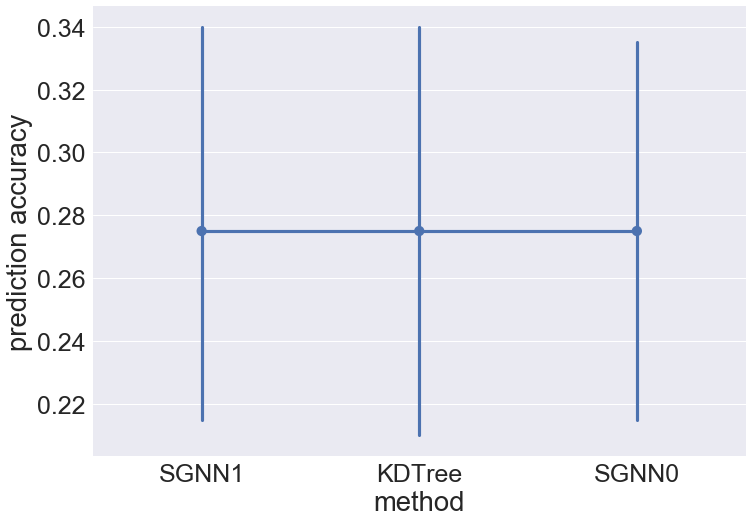} 	}}
	\subfloat[]{{\includegraphics[width=0.25\textwidth]{./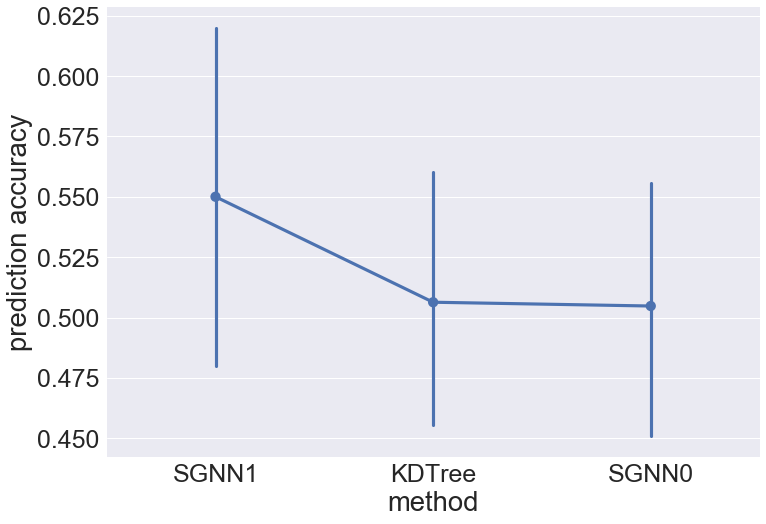} 	}} 	 
    \subfloat[]{{\includegraphics[width=0.25\textwidth]{./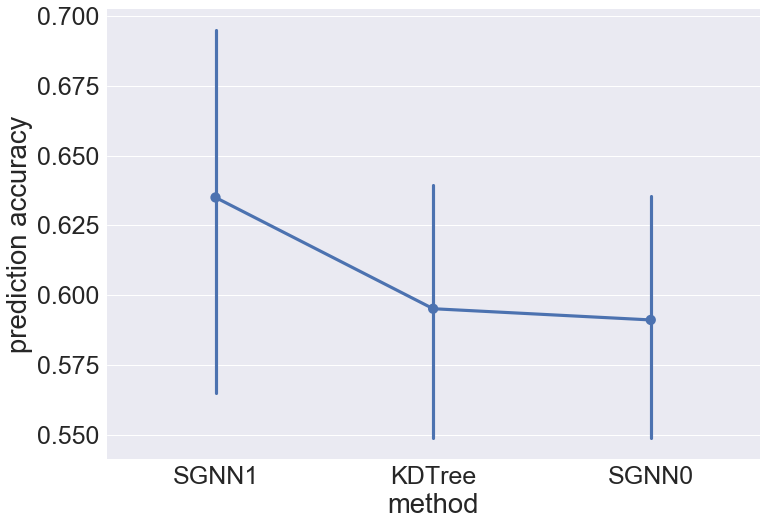} 	}} 
   \subfloat[]{{\includegraphics[width=0.25\textwidth]{./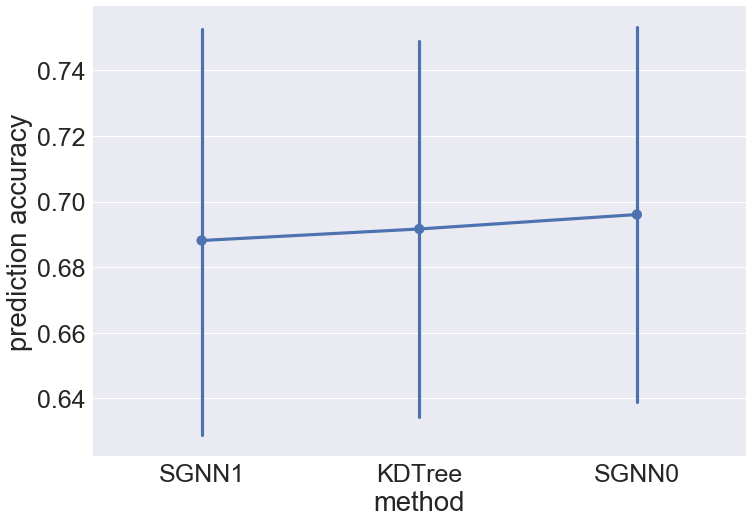} 	}} \\
    \subfloat[]{{\includegraphics[width=0.25\textwidth]{./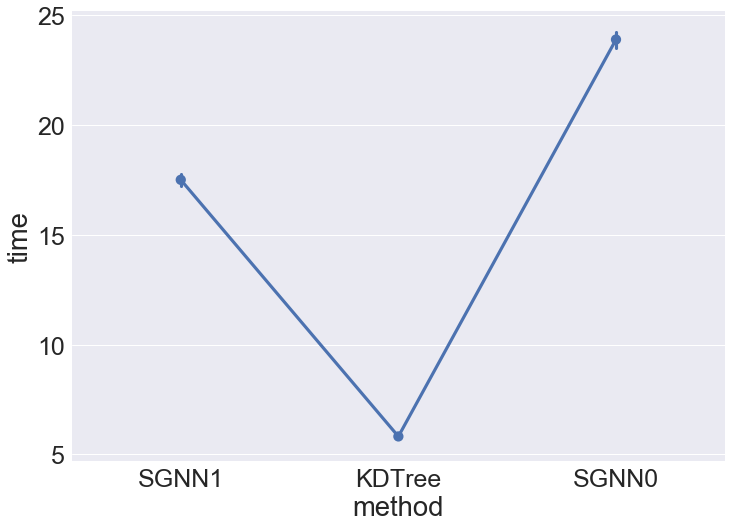} 	}}
	\subfloat[]{{\includegraphics[width=0.25\textwidth]{./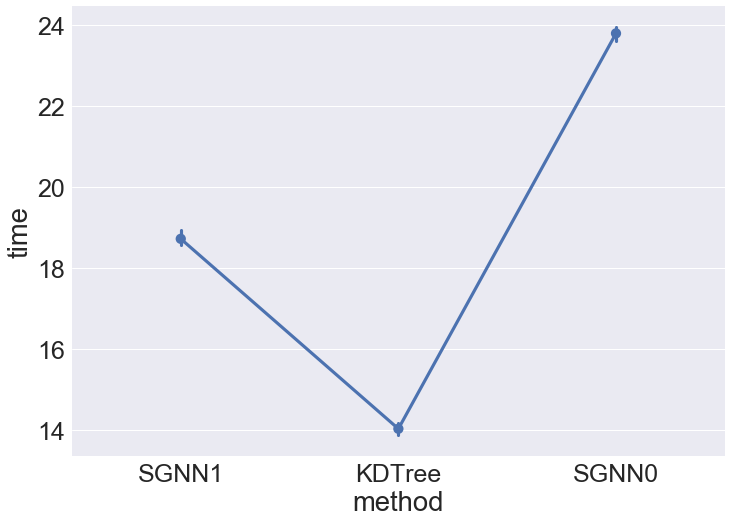} 	}} 	 
    \subfloat[]{{\includegraphics[width=0.25\textwidth]{./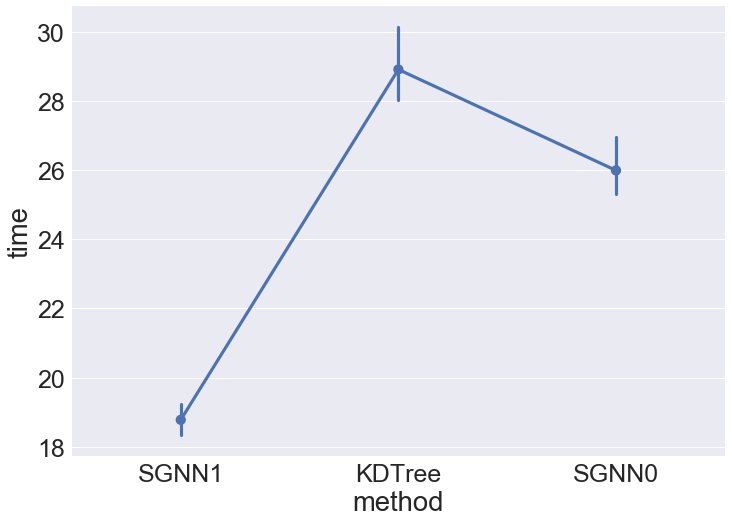} 	}} 
   \subfloat[]{{\includegraphics[width=0.25\textwidth]{./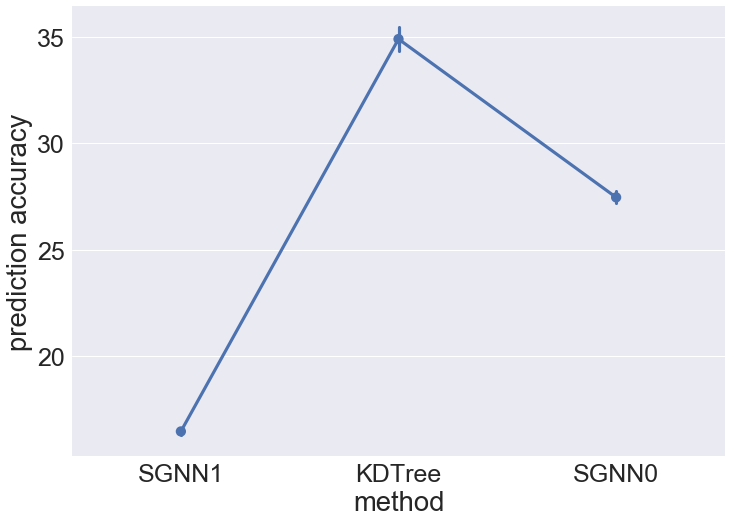} 	}} 
    \vspace{0cm}
    \caption{Prediction accuracy and running time of different methods on COIL-100 dataset as the size of training set increases. (a,e) Using 25\% of training data. (b,f) Using 50\% of training data. (c,g) Using 75\% of training data. (d,h) Using 100\% of training data.} \label{fig:coil}%
\end{figure*}

Next, we study how the performance of SGNN changes with the length of random walks. We choose $T=2$ and compare different methods on the same datasets. The results are shown in Figure~\ref{fig:longwalk}. The SGNN(2) method outperforms the competitors. Interestingly, SGNN(2) also outperforms the exact nearest neighbor algorithm on the MNIST dataset. This result might appear counter-intuitive, but we explain the result as follows. Given that we use a simple metric (Euclidean distance), the exact $K$-nearest neighbors are not necessarily appropriate candidates for making a prediction; Although the exact nearest neighbor algorithm finds the global minima, the neighbors of the global minima on the graph might have large values. On the other hand, the SGNN(2) method finds points that have small values and also have neighbors with small values. This stability acts as an implicit regularization in the SGNN(2) algorithm, leading to an improved performance.

%%%%%%%%%%%%%%%%%%%%%%%%%%%%%%%%%%%%%%%%%%%
\section{Conclusions}

We showed a continuation method for discrete optimization problems. The method is the best affine approximation of a deformation that is guaranteed to produce a HCF approximation with the same global minima. We applied the continuation method to a graph-based nearest neighbor search, and showed improved performance on two image classification domains. The nearest neighbor algorithm has a number of appealing features. In particular, the runtime in the test phase grows modestly with the size of training set. 

%The current continuation method for discrete optimization does not take the variance of the noisy observations into account. A future direction is to design a method that can deal with uncertainty in observations.  
%Taking the variance of the observations into account... 

%From our experiments, the runtime in the test phase decreases with the size of training set for a fixed accuracy level. \todoy{really?} It remains for future work to analyze the classification algorithm and show this property in theory. 
%Theoretical analysis: improving the existing analysis.

%Bandit problem? noisy observations?
%HnR?

\clearpage
%\newpage

\bibliography{biblio}

\end{document}